\theoremstyle{plain}
\newtheorem{theorem}{Theorem}[section]
\newtheorem{proposition}[theorem]{Proposition}
\theoremstyle{definition}
\newtheorem{definition}[theorem]{Definition}
\newtheorem{example}[theorem]{Example}
\theoremstyle{remark}
\newtheorem{remark}[theorem]{Remark}
\def\mcl#1{\mathcal{#1}}
\def\bracket#1{\left\langle #1\right\rangle}
\def\nn{\nonumber}
\def\mr{\mathrm}
\def\alg{\mcl{A}}
\def\modu{\mcl{M}}
\def\Bbracket#1{\bigg\langle #1\bigg\rangle}
\def\red#1{\textcolor{black}{#1}}
\def\r#1{\mathbb{R}^{#1}}
\DeclareSymbolFont{EulerExtension}{U}{euex}{m}{n}
\DeclareMathSymbol{\euintop}{\mathop} {EulerExtension}{"52}
\DeclareMathSymbol{\euointop}{\mathop} {EulerExtension}{"48}
\newenvironment{mythm}[1][]{\medskip\par\noindent{\bfseries #1}\ \,\,\em}{\medskip\par}
\author{Yuka Hashimoto$^{1,2}$\quad Masahiro Ikeda$^{2,3}$\quad Hachem Kadri$^4$\medskip\\
{\normalsize 1. NTT Corporation, Tokyo, Japan}\\
{\normalsize 2. Center for Advanced Intelligence Project, RIKEN, Tokyo, Japan}\\
{\normalsize 3. Keio University, Yokohama, Japan}\\
{\normalsize 4. Aix-Marseille University, CNRS, LIS, Marseille, France}}
\date{}
\title{$C^*$-Algebraic Machine Learning: Moving in a New Direction}
\begin{document}

\twocolumn[
\maketitle




]




\begin{abstract}

Machine learning has a long collaborative tradition with several fields of mathematics, such as statistics, probability and linear algebra. We propose a new direction for machine learning research: {\em $C^*$-algebraic ML}---a cross-fertilization between $C^*$-algebra and machine learning. The mathematical concept of $C^*$-algebra is a natural generalization of the space of complex numbers. It enables us to unify existing learning strategies, and construct a new framework for more diverse and information-rich data models. We explain why and how to use $C^*$-algebras in machine learning, and provide technical considerations that go into the design of $C^*$-algebraic learning models in the contexts of kernel methods and neural networks. Furthermore, we discuss open questions and challenges in $C^*$-algebraic ML and give our thoughts for future development and applications.
\end{abstract}

\section{Introduction}
\label{introduction}

Machine learning problems and methods are currently becoming more and more complicated.
We have many types of structured data, such as time-series data, image data, and graph data.
In addition, not only are the models large, but multiple models and tasks have to be considered in some situations.

To address these situations, we propose {\em $C^*$-algebraic machine learning}: application of $C^*$-algebra to machine learning methods.
Typical examples of $C^*$-algebras are the space of continuous functions on a compact space and the space of bounded linear operators on a Hilbert space.
$C^*$-algebra was first proposed in quantum mechanics to model physical observables and
has been investigated in pure mathematics, mathematical physics, and quantum mechanics.
Whereas its rich mathematical and theoretical investigations, its main application is limited to quantum mechanics.
In the current situation in machine learning, we believe that it is time to apply these rich investigations to machine learning methods.
Since $C^*$-algebras enable us to unify complex values, matrices, functions, and linear operators,
we expect that the generalization of machine learning methods using $C^*$-algebras allows us to unify existing methods and construct a framework 
for more complicated data and models.
Figure~\ref{fig:overview} shows an overview of the $C^*$-algebraic machine learning.

In this paper, we mainly focus on two approaches: kernel methods and neural networks.
For kernel methods, most of existing methods are realized using reproducing kernel Hilbert spaces~(RKHSs) or vector-valued RKHS~(vvRKHS), which are constructed by positive definite kernels~\citep{scholkopf01,saitoh16}.
The reproducing property enables us to evaluate the value of a function at a point using the inner product, which makes it easy for us to implement algorithms and analyze them theoretically.
Moreover, we can apply kernel methods to probabilistic and statistical settings by embedding probability measures in an RKHS. This embedding is called the kernel mean embedding.
However, since RKHSs~(resp. vvRKHSs) are complex- (resp. vector-) valued function spaces, the output of the models is usually complex- or vector-valued.
In addition, appropriate ways of the construction of positive definite kernels are not trivial.
The generalization of RKHS by means of the $C^*$-algebra enables us to output more general data, such as functions and operators~\citep{hashimoto21}.
Moreover, $C^*$-algebras give us a method to construct $C^*$-algebra-valued positive definite kernels for structured data~\citep{hashimoto23-aistats}.
The noncommutative product structure in $C^*$-algebras ($ab\neq ba$ for elements $a,b$ in the $C^*$-algebra) enables us to construct an operation that goes beyond the multiplication and convolution. 

As for neural networks, the models are becoming larger and more complicated.
For example, in ensemble learning~\citep{dong20,Ganaie21}, multitask learning~\citep{zhang14,ruder19}, and meta-learning~\citep{ravi17,finn17,rusu19}, we need to consider multiple tasks and models.
In addition, large language models (LLMs) have large numbers of learning parameters and need large numbers of training samples.
To fully extract features of data in these architectures, we expect that $C^*$-algebras play an important role since they enable us to represent multiple models and tasks simultaneously~\citep{hashimoto22}.
In addition, although one reason for the success of current large modes like LLMs is a large number of training samples, in some applications, we do not have enough data to train models.
For example, we do not always have enough healthcare data, and for anomaly detection, we do not always have enough abnormal data.
Moreover, federated learning has been investigated to analyze privacy data distributed in multiple nodes without sharing it with other nodes~\citep{mcmahan17,bonawitz21}.
In these cases, we cannot rely on the powerfulness of neural network models coming from large numbers of training samples.
We believe that the rich structure of models with $C^*$-algebras will offer a new approach to address these situations.

In this paper, we discuss known advantages of applying $C^*$-algebra to machine learning, as summarized as follows:
\begin{itemize}[leftmargin=*,nosep]
\item We can generalize the complex-valued inner product to a $C^*$-algebra-valued inner product.
Since many machine learning methods involve the inner product, such as projection and computing correlations, the generalization can help us extract data features
effectively in these methods.
\item Using the $C^*$-algebra-valued inner product, we can generalize RKHS by means of $C^*$-algebra and learn function- and operator-valued maps (Subsection~\ref{subsec:rkhm}).
\item We can design positive definite kernels for structured data using the noncommutative product (Subsection~\ref{subsec:rkhm}).
\item We can use the norm of the $C^*$-algebra to alleviate the dependency of generalization error bound on the output dimension (Subsection~\ref{subsec:rkhm}).
\item Using the generalization of kernel mean embedding by means of $C^*$-algebra, we can analyze operator-valued measures such as positive operator-valued measures and spectral measures (Subsection~\ref{subsec:kme}).
\item We can continuously combine multiple models and use the tools for functions, which can be applied to ensemble, multitask, and meta-learning (Subsection~\ref{subsec:c_star_net}).
\item The noncommutative product structures in $C^*$-algebras induce interactions among models (Subsection~\ref{subsec:interaction}).
\item We can construct group equivariant 
neural networks using the products in group $C^*$-algebras (Subsection~\ref{subsec:interaction}).
\end{itemize}
In each section and subsection, we discuss the above advantages in more detail.
Then, we go into the technical details to show that the notion of $C^*$-algebra can be adapted 
to machine learning methods such as kernel methods and \red{neural networks}.
%
%
We also provide new results showing the advantages 
of applying $C^*$-algebra. Specifically,
\begin{itemize}[leftmargin=*,nosep]
 \item By generalizing neural networks by means $C^*$-algebra, we can show that even if the activation functions are linear, the expressiveness of the generalized network, called $C^*$-algebra net, grows as the depth grows~(Subsection~\ref{subsec:expressiveness}).
\item $C^*$-algebra net 
fills the gap between convex and nonconvex optimization for neural networks (Subsection~\ref{subsec:optimization}).
\end{itemize}

Finally, we discuss future directions of $C^*$-algebraic machine learning.
We discuss open problems and several possible examples of $C^*$-algebras that could be 
useful in machine learning problems.

\begin{figure}[t]
    \centering
    \includegraphics[scale=0.45]{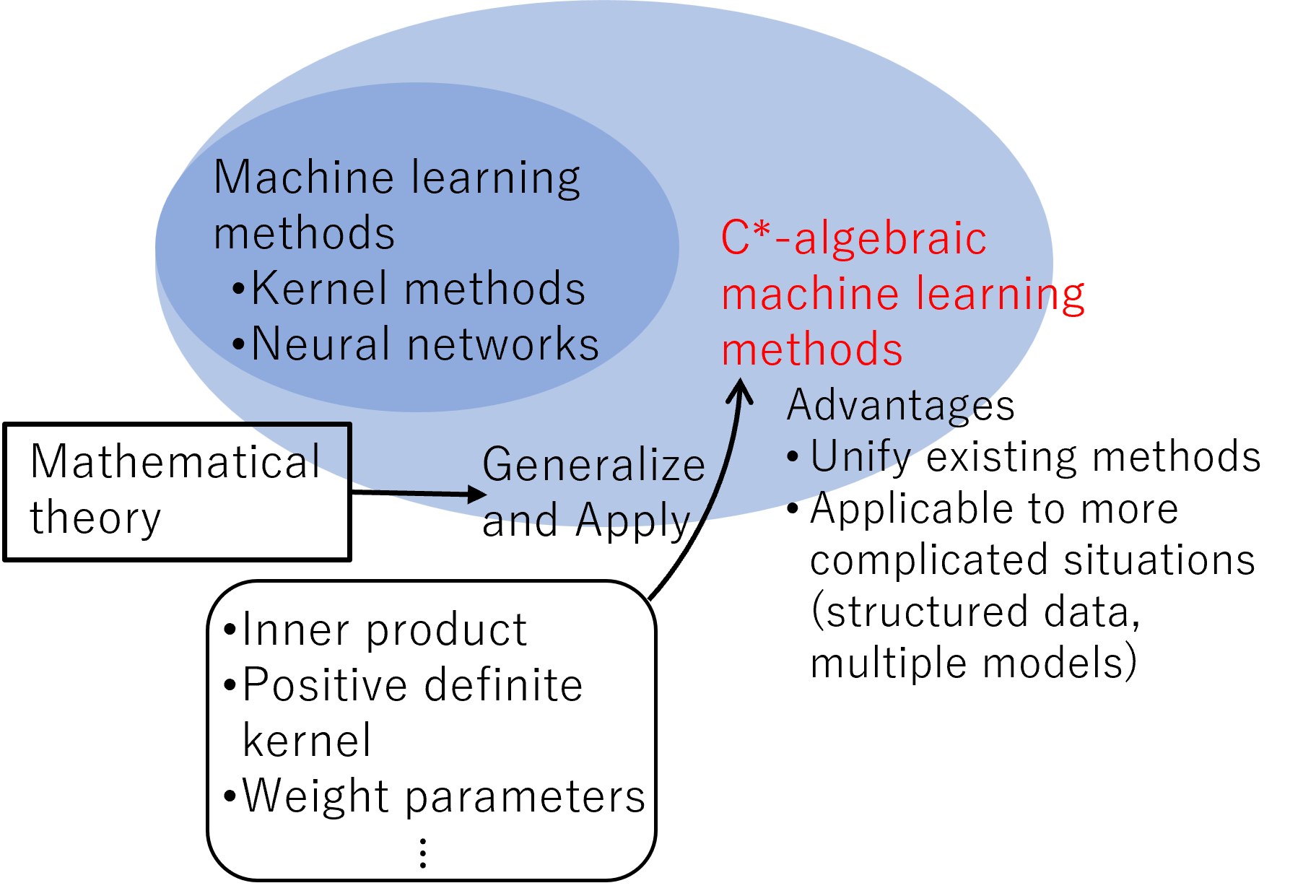}\vspace{-.3cm}
    \caption{Overview of the $C^*$-algebraic machine learning}
    \label{fig:overview}\vspace{-.3cm}
\end{figure}

\section{$C^*$-algebra}\label{sec:c_alg}
$C^*$-algebra is a natural generalization of the space of complex numbers.
Thus, we can naturally generalize real- or complex-valued notions necessary to construct machine learning algorithms.
In $C^*$-algebras, we have arithmetic operations, addition, subtraction, and multiplication, which are fundamental for arbitrary algorithms.
Moreover, we have the structure of involution, which is a generalization of the complex conjugate and is necessary to generalize complex-valued notions.
For example, positive definite kernels for defining RKHSs are, in general, defined as complex-valued functions.
Other important notions for machine learning algorithms are the magnitude and the order.
For example, to evaluate the discrepancy between two values, we consider the magnitude of the difference between the two values.
In addition, we consider minimization or maximization problems in many situations.
The notions of minimum and maximum are based on an order.
We can generalize the order for real values to that in $C^*$-algebras.
In the following, we will see how we can technically generalize real- or complex-valued notions to  $C^*$-algebra-valued ones.
We first recall the definition of $C^*$-algebra.
\begin{definition}[$C^*$-algebra]~\label{def:c*_algebra}
A set $\mcl{A}$ is called a {\em $C^*$-algebra} if it satisfies the following three conditions:
\begin{enumerate}[leftmargin=*,topsep=-3pt]
 \item $\mcl{A}$ is an algebra over $\mathbb{C}$ and equipped with a bijection $(\cdot)^*:\mcl{A}\to\mcl{A}$ that satisfies for $\alpha,\beta\in\mathbb{C}$ and $c,d\in\alg$,\smallskip\\
 $\bullet$ $(\alpha c+\beta d)^*=\overline{\alpha}c^*+\overline{\beta}d^*$,\\
 $\bullet$ $(cd)^*=d^*c^*$,\qquad
 $\bullet$ $(c^*)^*=c$.
 \item $\mcl{A}$ is a Banach space equipped with the norm $\Vert\cdot\Vert_{\alg}$, and for $c,d\in\mcl{A}$, $\Vert cd\Vert_{\alg}\le\Vert c\Vert_{\alg}\Vert d\Vert_{\alg}$ holds.

 \item For $c\in\mcl{A}$, $\Vert c^*c\Vert_{\alg}=\Vert c\Vert_{\alg}^2$ holds. ($C^*$-property)
\end{enumerate}
\end{definition}

The involution $(\cdot)^*$ is a generalization of the complex conjugate.
We can define an $\alg$-valued absolute value to evaluate the magnitude of elements in $\alg$ and compare the absolute value of elements with the partial order defined as follows.
\begin{definition}[Positive]~\label{def:positive}
An element $c\in\mcl{A}$ is called {\em positive} if there exists $d\in\mcl{A}$ such that $c=d^*d$.
We denote $c\le_{\alg} d$ if $d-c$ is positive for $c,d\in\alg$.
We denote by $\alg_+$ the subset of $\alg$ composed of all positive elements in $\alg$.
\end{definition}
For $c\in\alg$, the $\alg$-valued absolute value $\vert c\vert_{\alg}$ is defined as a unique element $d\in\alg_+$ that satisfies $d^2= c^*c$.
Using the above notions, we can define an $\alg_+$-valued objective function and optimize it in the sense of the order $\le_{\alg}$.

We list typical examples of $C^*$-algebras that are useful for machine learning below.
We can find more theoretical properties and examples in, for example, \citet{murphy90} and \citet{davidson96}.

\begin{example} 
\label{ex:vonNeumann}

~\vspace{-0.4cm}

\begin{enumerate}[leftmargin=*]
\item Let $\alg=C(\mcl{Z})$, the $C^*$-algebra of continuous functions on a compact Hausdorff space $\mcl{Z}$.
The product of two functions $c,d\in\alg$ is defined as $(cd)(z)=c(z)d(z)$ for $z\in\mcl{Z}$, the involution is defined as $c^*(z)=\overline{c(z)}$, the norm is the supnorm.
An element $c\in\alg$ is positive if and only if $c(z)\ge 0$ for any $z\in\mcl{Z}$.
Note that the product is commutative, i.e., $cd=dc$ for $c,d\in\alg$.
For example, we can use $C(\mcl{Z})$ for representing functional data and combining multiple models continuously (see Subsection~\ref{subsec:c_star_net}).

\item Let $\alg=\mcl{B}(\mcl{W})$, the $C^*$-algebra of bounded linear operators on a Hilbert space $\mcl{W}$.
The product is the product~(the composition) of operators, the involution is the adjoint,
and the norm is the operator norm.
An element $c\in\alg$ is positive if and only if $c$ is Hermitian positive semi-definite.
Note that, unlike the first example, the product is noncommutative, i.e., $cd\neq dc$ for $c,d\in\alg$.
For example, we can use $\mcl{B}(\mcl{W})$ for treating spectral and positive operator-valued measures~(see Subsection~\ref{subsec:kme}).

\item If $\mcl{W}$ is a $d$-dimensional space, then $\mcl{B}(\mcl{W})$ is the $C^*$-algebra of squared matrices $\mathbb{C}^{d\times d}$.
The space of block diagonal squared matrix is a $C^*$-subalgebra of $\mathbb{C}^{d\times d}$.
For example, we can use $\mathbb{C}^{d\times d}$ to represent adjoint matrices of graphs and images.

\item 
The group $C^*$-algebra on a finite discrete group $G$, which is denoted as $C^*(G)$, is the set of maps from $G$ to $\mathbb{C}$.
The product is defined as $(a\cdot b)(g)=\sum_{h\in G}a(h)b(h^{-1}g)$ for $g\in G$,
and the adjoint is defined as $a^*(g)=\overline{a(g^{-1})}$ for $g\in G$.
The norm is $\Vert a\Vert=\sup_{[\pi]\in\hat{G}}\Vert\pi(a)\Vert$,
where $\hat{G}$ is the set of equivalence classes of irreducible unitary representations of $G$.
Note that if $G$ is an abelian group, then the product is commutative.
On the other hand, if $G$ is not an abelian group, then the product is noncommutative.
For example, we can use $C^*(G)$ to construct group equivariant neural networks (see Subsection~\ref{subsec:interaction}).
\end{enumerate}
\end{example}

\section{Representing Data and Models Using $C^*$-algebra}\label{sec:data_model}
Recently, machine learning problems
are getting more and more complicated.
As we saw in Section~\ref{sec:c_alg}, $C^*$-algebra is a natural mathematical framework to generalize the notion of complex values to functions and operators.
Thus, applying functions and operators in $C^*$-algebras helps us deal with these complicated situations.
We can effectively represent structured data and multiple models using $C^*$-algebras.
At least, we have the following perspectives.

    \paragraph{Data structure} In many cases, data is not just composed of finite dimensional vectors but composed of time series, graphs, large images, and so on.
    To analyze these kinds of data with higher accuracy, we need to consider the structure of the data and represent it properly.
    $C^*$-algebra helps us represent the data structure.
    For example, if we have finite time-series data with a constant time interval, then we can represent the series with a finite-dimensional vector.
    However, if the series is infinite or if the time interval is not constant, then it is more reasonable to use a function to represent the time series.
    In addition, for graph data, we can use adjacent matrices to represent the graphs.
    Images can also be regarded as functions that map a pixel to the intensity of the pixel.
    Functions and matrices (operators) are perfect tools to represent the rich structure of data.

    \paragraph{Multiple models} In ensemble, multitask, and meta-learning, we consider multiple models simultaneously.
    In these cases, representing the models simultaneously using functions in $C^*$-algebras is more effective than representing each of them individually since we can use tools of \red{functional analysis} to
    %
    %
    extract common features of the models.
    \citet{hashimoto22} used integral and regression to extract common features regarding the gradients of the models. 

    \paragraph{Limited number of samples} In few-shot learning, we try to train models with a limited number of samples.
    We often come across situations where the number of training samples is limited.
    For example, we do not always have enough healthcare data, biological data, abnormal data in anomaly detection, and so on.
    In this case, we need to extract as much information as possible from these samples.
    By using functions in $C^*$-algebras, we can represent infinitely many models, which enables us to extract a maximal amount of features.


{Regarding the data structure, we can also deal with structured data such as functional data with other methods. For example, stochastic processes~\citep{zhu11}, operator learning~\citep{kovachki23}, vector-valued RKHSs~\citep{kadri16}, the framework of functional data analysis~\citep{wang16}. 
Advantages of applying $C^*$-algebras compared to them is summarized as follows.}

\paragraph{Product structure}
{A $C^*$-algebra has the product structure. It enables us to generalize algorithms on Hilbert spaces to those on Hilbert $C^*$-modules. 
Regarding functional data, we can also use other basic function spaces such as $L^2$ spaces and Sobolev spaces that do not have product structures. However, the above generalizations are not possible with them. Similarly, regarding graph data, we can also vectorize a $\mathbb{C}^{d\times d}$ adjacency matrix and regard it as a $d^2$-dimensional vector.
However, we do not have the product structure in that case.
In addition, $C^*$-algebras allow "flexible" product structure. Depending on the $C^*$-algebra, we can use and take advantage of different product structures. For example, the product structure is the convolution for group $C^*$-algebras (See Example~\ref{ex:vonNeumann}.4). We can also apply noncommutative product structures to induce interactions~(See Subsection~\ref{subsec:interaction} for more details).}

\paragraph{Norm}
{The norm in $C^*$-algebras is useful for obtaining theoretical evaluations with milder dependencies on the data dimension than other norms. Indeed, for matrices, we can use the operator norm to alleviate the dependency of the generalization bound on the output data dimension, compared to the case where we regard a $d$ by $d$ matrix as a $d^2$-dimensional vector and use the vector norms such as the Euclidean norm~\citep{hashimoto23-deeprkhm}.}

\paragraph{Inner product}
{We can naturally generalize the notion of inner product and Hilbert space by using $C^*$-algebra, which allows us to generalize RKHS to the space of $C^*$-algebra-valued functions. Learning $C^*$-algebra-valued maps is of great importance in many practical problems where the outputs to be predicted are not scalars but complex and structured data. The generalization of the notion of inner product and Hilbert space is by virtue of the properties of $C^*$-algebra, and this type of generalization is not easy for other notions than $C^*$-algebra. See Section~\ref{sec:hilbert_c_module} and Subsection~\ref{subsec:rkhm} for more details.}

\section{$C^*$-algebra-valued Inner Product: The First Step in Constructing Algorithms}\label{sec:hilbert_c_module}


After representing data and models using $C^*$-algebras, we incorporate them into algorithms.
In the algorithms, we generalize real- or complex-valued notions to $C^*$-algebra-valued ones.
%
For methods implemented in Hilbert spaces, an important notion is the inner product.
For example, a projection of a vector onto a low-dimensional subspace is obtained by computing inner products between the vector and vectors in an orthonormal basis of the subspace.
In addition, for functions in RKHSs~\citep{saitoh16,hashimoto21}, the evaluation of a function at a point is obtained by computing the inner product of the function and the feature vector corresponding to the point~(see Subsection~\ref{subsec:rkhm} for details).
To analyze structured data such as functional and graph data represented by a $C^*$-algebra, generalizing the inner product to the $C^*$-algebra enables us to extract more information than the standard complex-valued inner product.

The space that has the structure of the $C^*$-algebra-valued inner product is called Hilbert $C^*$-module~\citep{lance95}, which is a generalization of Hilbert space.
In the following, we review the definition of Hilbert $C^*$-module.
We first introduce $C^*$-module over a $C^*$-algebra $\alg$, which is a generalization of a vector space.
\begin{definition}[$C^*$-module]\label{def:c*module}
Let $\modu$ be an abelian group with an operation $+$.
If $\modu$ {is equipped with} a (right) $\alg$-multiplication, $\modu$ is called a (right) {\em $C^*$-module} over $\alg$.
\end{definition}
We replace the vector space with a $C^*$-module to represent data.
For example, we use $\mathbb{R}^d$ or $\mathbb{C}^d$ to represent $d$ real- or complex-valued elements of a sample.
If a sample is composed of $d$ elements in $\alg$, e.g., $d$ functions, then we replace $\mathbb{R}^d$ or $\mathbb{C}^d$ with a $C^*$-module $\alg^d$.
%
Although we consider right multiplications in this paper,
considering left multiplications instead of right multiplications is also possible.
%

In a $C^*$-module, we can consider an $\alg$-valued 
inner product.
\begin{definition}[$\alg$-valued inner product]\label{def:innerproduct}
A $\mathbb{C}$-linear map with respect to the second variable $\bracket{\cdot,\cdot}_{\modu}:\modu\times\modu\to\alg$ is called an $\alg$-valued {\em inner product} if it satisfies the following properties for $u,v,w\in\modu$ 
and $c,d\in\alg$:
\begin{enumerate}[leftmargin=*,nosep,topsep=-3pt]
 \item $\bracket{u,vc+wd}_{\modu}=\bracket{u,v}_{\modu}c+\bracket{u,w}_{\modu}d$,
 \item $\bracket{v,u}_{\modu}=\bracket{u,v}_{\modu}^*$,
 \item $\bracket{u,u}_{\modu}\ge_{\alg} 0$,
 \item If $\bracket{u,u}_{\modu}=0$ then $u=0$.
\end{enumerate}
\end{definition}

Analogous to the case of $C^*$-algebra, we can define two notions to measure the magnitude of an element in a $C^*$-module $\modu$ equipped with an $\alg$-valued inner product. 
\begin{definition}[$\alg$-valued absolute value and norm]\label{def:absolute_norm}
For $u\in\modu$, the {\em $\alg$-valued absolute value} $\vert u\vert_{\modu}$ on $\modu$ is defined by the positive element $\vert u \vert_{\modu}$ of $\alg$ such that $\vert u\vert_{\modu}^2=\bracket{u,u}_{\modu}$.   
The (real-valued) norm $\Vert \cdot\Vert_{\modu}$ on $\modu$ is defined by $\Vert u\Vert_{\modu} =\big\Vert\vert u\vert_{\modu}\big\Vert_\alg$. 
\end{definition}
\begin{definition}[Hilbert $C^*$-module]\label{def:hil_c*module}
Let $\modu$ be a $C^*$-module over $\alg$ equipped with an $\alg$-valued inner product.
If $\modu$ is complete with respect to the norm $\Vert \cdot\Vert_{\modu}$, it is called a {\em Hilbert $C^*$-module} over $\alg$ or {\em Hilbert $\alg$-module}.
\end{definition}

\section{$C^*$-algebraic Kernel Methods and Neural Networks}
We present two examples of machine learning methods, kernel methods and neural networks, to show how and why we apply $C^*$-algebra.
\subsection{$C^*$-algebra and kernels: from Hilbert spaces to Hilbert modules}\label{subsec:rkhm}
Reproducing kernel Hilbert spaces (RKHSs) enable us to extract nonlinear features of data~\citep{scholkopf01,saitoh16}.
We first define a complex-valued function $k$, which is called positive definite kernel,
and construct a Hilbert space called RKHS using $k$.
Since RKHSs have high representation power and are theoretically solid, they have been applied to various machine learning methods, such as principal component analysis, support vector machine, and regression.
However, RKHSs and its vector-valued generalization vvRKHSs
are complex- and vector-valued function spaces; the output of the models are the usually complex- and vector-valued, respectively.
Thus, we cannot represent functions whose outputs are in $C^*$-algebras.
In addition, for structured data, complex-valued kernels degenerate the data to a complex value and extracting the information on the structure of data is difficult.
Therefore, the construction of an appropriate positive definite kernel $k$ is not easy.
To resolve these issues, we generalize the positive definite kernel and RKHS by means of $C^*$-algebra~\citep{heo08,hashimoto22}.
Then, we can define reproducing kernel Hilbert $C^*$-module (RKHM), which is a generalization of RKHS.
RKHMs are Hilbert $C^*$-modules.
Thus, they have $C^*$-algebra-valued inner products.
In addition, they are spaces of $C^*$-algebra-valued functions.
By setting a suitable $C^*$-algebra, we can design a suitable RKHM for the given data.
Figure~\ref{fig:rkhm} shows an overview of kernel methods with RKHMs.
Applying RKHMs gives us a new twist on kernel methods.

\begin{figure}[t]
    \centering
    \includegraphics[scale=0.35]{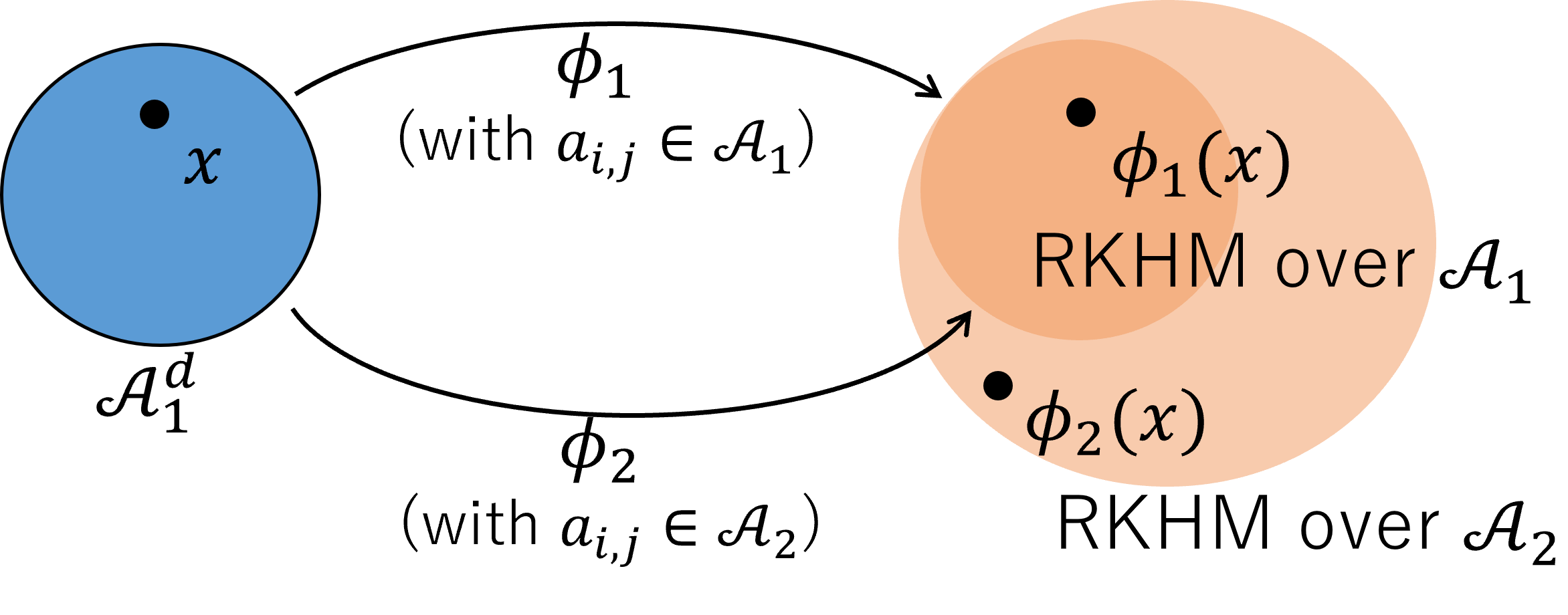}\vspace{-.5cm}
    \caption{Overview of kernel methods with RKHMs by \citet{hashimoto23-aistats}. Here, $\alg_1$ and $\alg_2$ are $C^*$-algebras and $a_{i,j}$ is the parameter of the $C^*$-algebra-valued positive definite kernel associated with the feature maps $\phi_1$ and $\phi_2$. If $\alg_1\subseteq \alg_2$, then the RKHM over $\alg_1$ is contained in the RKHM over $\alg_2$.}\vspace{-.3cm}
    \label{fig:rkhm}
\end{figure}

We review the definition of RKHM below.
First, we define an $\alg$-valued positive definite kernel on a set $\mcl{X}$ for data.
\begin{definition}[$\alg$-valued positive definite kernel]\label{def:pdk_rkhm}
 An $\mcl{A}$-valued map $k:\ \mcl{X}\times \mcl{X}\to\mcl{A}$ is called a {\em positive definite kernel} if it satisfies the following conditions: 
\begin{enumerate}[leftmargin=*,nosep]
 \item $k(x,y)=k(y,x)^*$ \;for $x,y\in\mcl{X}$,
 \item $\sum_{i,j=1}^nc_i^*k(x_i,x_j)c_j\!\!\ge_{\alg} 0$ \;for $n\in\mathbb{N}$, $c_i\in\alg$, $x_i\in\mcl{X}$.
\end{enumerate}
\end{definition}
\citet{hashimoto23-aistats} proposed to constructing $C^*$-algebra-valued kernels using the product structure of the $C^*$-algebra.
They considered a kernel with circulant matrices and the product with matrix-valued parameters of the kernel, which enables us to use an operation that goes beyond the convolution.

Let $\phi:\mcl{X}\to\alg^{\mcl{X}}$ be the {\em feature map} associated with $k$, which is defined as $\phi(x)=k(\cdot,x)$ for $x\in\mcl{X}$.
We construct the following $C^*$-module composed of $\alg$-valued functions:
\begin{equation*}
\modu_{k,0}:=\bigg\{\sum_{i=1}^{n}\phi(x_i)c_i\bigg|\ n\in\mathbb{N},\ c_i\in\alg,\ x_i\in\mcl{X}\bigg\}.
\end{equation*}
Let $\bracket{\cdot,\cdot}_{\modu_k}:\modu_{k,0}\times \modu_{k,0}\to\alg$ defined as
\begin{equation*}
\Bbracket{\sum_{i=1}^{n}\phi(x_i)c_i,\sum_{j=1}^{l}\phi(y_j)d_j}_{\modu_k}:=\sum_{i=1}^{n}\sum_{j=1}^{l}c_i^*k(x_i,y_j)d_j
\end{equation*}
for $c_i,d_i\in\alg$ and $x_i,y_i\in\mcl{X}$.
By the properties in Definition~\ref{def:pdk_rkhm} of $k$, $\bracket{\cdot,\cdot}_{\modu_k}$ is an $\alg$-valued inner product and has the reproducing property
\begin{equation*}
\bracket{\phi(x),v}_{\modu_k}=v(x)
\end{equation*}
for $v\in\modu_{k,0}$ and $x\in\mcl{X}$.
Since $v$ is an $\alg$-valued function, this reproducing property enables us to deal with $\alg$-valued functions, such as the regression of $\alg$-valued functions~\citep{hashimoto23-aistats}.

The {\em reproducing kernel Hilbert $\alg$-module (RKHM)} associated with $k$ is defined as the completion of $\modu_{k,0}$.  
We denote by $\modu_k$ the RKHM associated with $k$.  

An advantage of using $C^*$-algebras is that the operator norm is available.
For the case of $\alg=\mathbb{C}^{d\times d}$, we can also regard $\alg$ as a Hilbert space equipped with the Hilbert--Schmidt inner product.
However, in this case, the norm of a matrix $a\in\mathbb{C}^{d\times d}$ is calculated as $\sum_{i=1}^d\sum_{j=1}^d\vert a_{i,j}\vert^2$, where $a_{i,j}$ is the $(i,j)$-entry of $a$.
On the other hand, the operator norm of $a$ is calculated as $\max_{\Vert v\Vert=1}\sum_{i=1}^d\vert\sum_{j=1}^d a_{i,j}v_j\vert^2$.
Since $\vert \sum_{j=1}^d a_{i,j}v_j\vert\le \sum_{j=1}^d\vert a_{i,j}\vert$ and $\sum_{i=1}^d\vert v_j\vert^2=1$, the dependency of the operator norm on the dimension $d$ is milder than that of the Hilbert--Schmidt norm.
This fact is useful for deriving the generalization bound of the kernel ridge regression.
By virtue of introducing $C^*$-algebra and using the operator norm, we can alleviate the dependency of the generalization bound on the output dimension~\citep{hashimoto23-deeprkhm}.

\subsubsection{Kernel mean embedding}\label{subsec:kme}
Kernel mean embedding enables us to generalize kernel methods to analyze the distribution of data~\cite{kernelmean,sriperumbudur11}.
We define a map that maps a distribution to a vector in an RKHS by integrating the positive definite kernel with respect to the distribution.
This map is called kernel mean embedding and enables us to analyze the distribution in the RKHS.
We can generalize the kernel mean embedding using $C^*$-algebras~\citep{hashimoto22}.
Theoretically, to define the kernel mean embedding, we need the Riesz representation theorem.
Although the Riesz representation theorem is always true for Hilbert spaces, we do not always have the corresponding theorem for Hilbert $C^*$-modules.
According to \citet{skeide00}, we have the Riesz representation theorem if the Hilbert $C^*$-module is in a special class called von-Neumann module (see Definition 4.4 in~\citealt{skeide00}).
In this case, instead of the $[0,1]$-valued (more generally, finite signed or complex-valued) measure, we can map an $\alg$-valued measure~\citep{hashimoto21} to a vector in an RKHM.
$\alg$-valued measures are defined as the special case of vector measures~\citep{dinculeanu67,dinculeanu00}.
Spectral measures and positive operator-valued measures are examples of $\mcl{B}(\mcl{W})$-valued measures for some Hilbert space $\mcl{W}$.
Positive operator-valued measures are introduced in quantum mechanics and
are used in extracting information on the probabilities of outcomes from a state~\citep{holevo11}.
Using the kernel mean embedding for $\alg$-valued measures, we can analyze positive operator-valued measures.
{\citet{hashimoto21} used the principal component analysis with kernel mean embedding to RKHM to analyze the strength of interaction effects for functional data. They also propose a MMD (maximal mean discrepancy) with kernel mean embedding to RKHM. Using the MMD with RKHM, we can define a distance between two positive operator-valued measures.
We can also apply it to anomaly detection for quantum states~\citep{hashimoto_von_neumann}.}
%

\subsubsection{Deep learning with kernels}
Combining kernel methods and deep learning to take advantage of the representation power and the theoretical solidness of kernel methods, and the flexibility of deep learning has been investigated~\cite {cho09,gholami16,bohn19,laforgue19}.
A generalization of these methods to RKHMs is also proposed~\citep{hashimoto23-deeprkhm}.
In this method, instead of considering the composition of functions in RKHSs, we consider the composition of functions in RKHMs.
The high representation power of RKHMs and the product structure of $C^*$-algebras make the deep learning method with kernels more powerful. 

\subsection{Neural network parameters}\label{subsec:c_star_net}
In classical neural networks, the input and output are vectors whose elements are real or complex values. 
The learning parameters are also real or complex values.
If the input and output are represented using $C^*$-algebras, 
\red{then in some cases the corresponding parameters should also be $C^*$-algebra-valued.}
\citet{hashimoto22} proposed $C^*$-algebra net to combine multiple neural network models into a neural network with $C^*$-algebra-valued parameters.
Figure~\ref{fig:c_star_net} shows an overview of the $C^*$-algebra net.
Using this new framework, we can combine multiple neural networks continuously, which is expected to be effective for ensemble, multitask, and meta-learning to fully extract features of data from multiple models or tasks.
In addition, the experiment by \citet{hashimoto22} shows that this framework is useful for the case where the number of training samples is limited.
As we stated in Section~\ref{sec:data_model}, we often come across these situations.

\begin{figure}[t]
    \centering
    \includegraphics[scale=0.23]{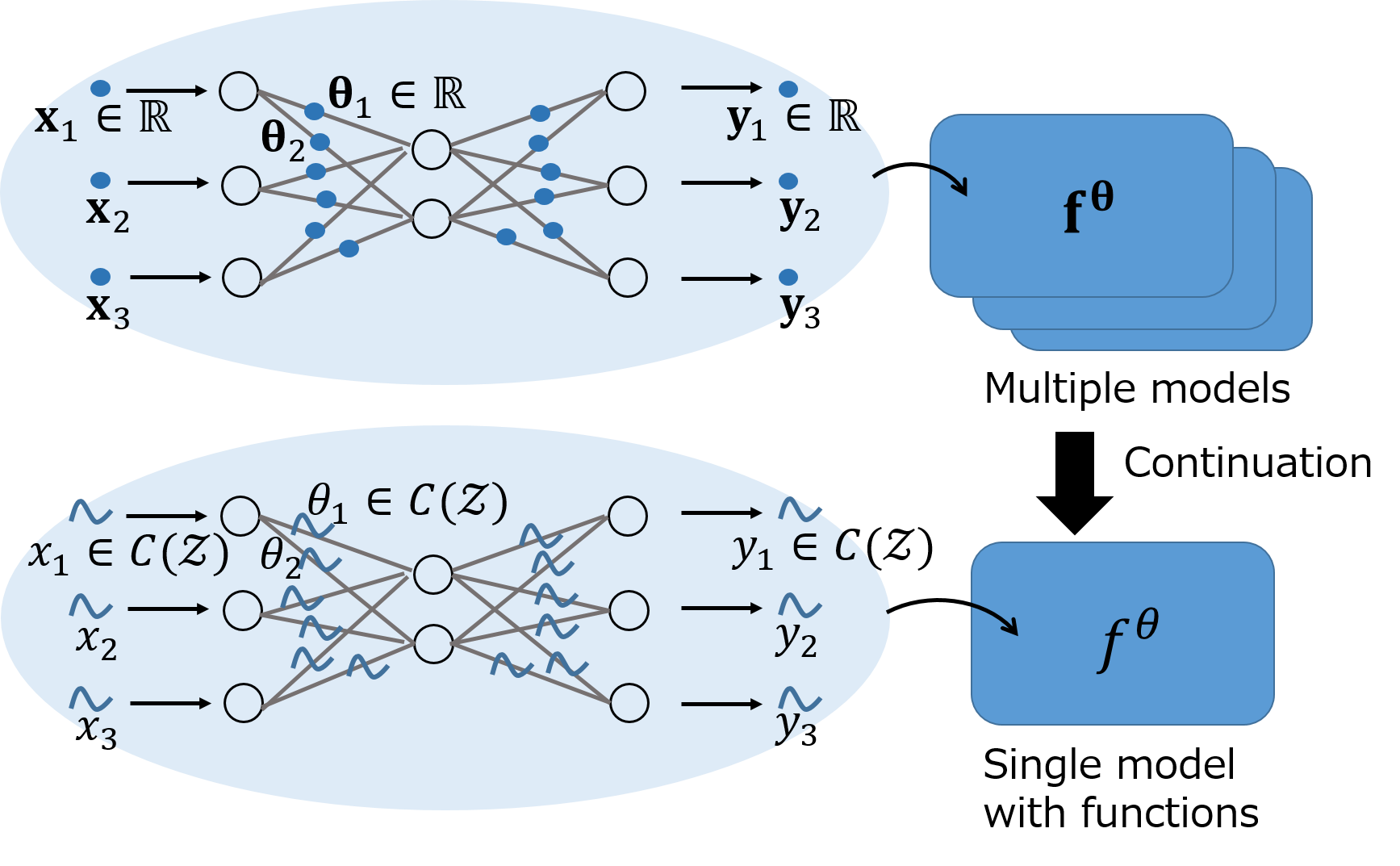}\vspace{-.5cm}
    \caption{Overview of $C^*$-algebra net by \citet{hashimoto22}. They focused on the $C^*$-algebra $C(\mcl{Z})$ for a compact Hausdorff space $\mcl{Z}$ and generalized neural network parameters to $C^*$-algebra-valued. We can continuously combine multiple (real-valued) neural networks using a single $C^*$-algebra net.}\vspace{-.3cm}
    \label{fig:c_star_net}
\end{figure}

We review the technical details of $C^*$-algebra net.
In this subsection, we focus on the case where $\alg$ is the $C^*$-algebra $C(\mcl{Z})$ for a compact Hausdorff space $\mcl{Z}$.
Let $L$ be the number of layers, and for $j=0,\ldots,L$, let $d_j$ be the width of the $j$th layer ($d_0$ is the input dimension).
Let $W^j\in\alg^{d_j\times d_{j-1}}$ and $b^j\in\alg^{d_j}$ be the weight matrix and the bias of the $j$th layer, each of whose element is in $\alg$.
In addition, let $\sigma_j:\alg^{d_j}\to\alg^{d_j}$ be a nonlinear activation function.
The $C^*$-algebra net $f$ is defined as
\begin{align*}
f(x)=\sigma_L(W^L\sigma_{L-1}( \cdots \sigma_1(W^1x+b^1)+\cdots)+b^L)
\end{align*}
for $x\in\alg^{d_0}$.
If $\sigma_j$ is pointwise, i.e., $\sigma_j(x)(z)=\tilde{\sigma}_j(x(z))$ for any $x\in\alg^{d_j}$ and some $\tilde{\sigma}_j:\mathbb{C}^{d_j}\to\mathbb{C}^{d_j}$, then we have
\begin{align}
f(x)(z)=
\tilde{\sigma}_L(W^L(z)\tilde{\sigma}_{L-1}(\cdots\tilde{\sigma}_1(W^1(z)&x(z)\nn\\
+b^1(z))\cdots)+b^L(z)).\label{eq:c_star_net}
\end{align}
Thus, we can represent infinitely many neural networks $\{f(x)(z)\}_{z\in\mcl{Z}}$ by using a single $C^*$-algebra net.
By learning the $\alg$-valued parameters, we can learn the parameters of infinitely many neural networks simultaneously.
In the following, we denote $f(x)(z)$ by $f_z(x)$.

A $C^*$-algebra net $f$ provides infinitely many $\mathbb{C}^{d_L}$-valued networks $f_z$ indexed by $z$.
If we need a single $\mathbb{C}^{d_L}$-valued network, we can integrate $f$ over $\mcl{Z}$.
Assume $\mcl{Z}$ is a measurable space and for any $x\in\alg^{d_0}$, $z\mapsto f_z(x)$ is measurable.
Let $P$ be a probability measure on $\mcl{Z}$.
Consider a map $\tilde{f}:\alg^{d_0}\to\mathbb{C}^{d_L}$ defined as
\begin{align}
\tilde{f}(x)=\int_{\mcl{Z}}f_z(x)\mr{d}P(z),\label{eq:c_star_net_integral}
\end{align}
which is an ensemble of functions $\{f_z\}_{z\in\mcl{Z}}$ with respect to the probability measure $P$.
 
In practical computations, we cannot deal with the infinite-dimensional space $\alg$ itself.
Thus, we restrict $\alg$ to a finite-dimensional space.
Let $\{v_1,\ldots ,v_m\}$ be a basis of the finite-dimensional space.
We represent each element of the weights as $w_{i,k}^j=\sum_{l=1}^mc_{l,i,k}^jv_l$ with coefficients $c_{l,i,k}^j\in\mathbb{C}$.
Here, $w_{i,k}^j$ is the $(i,k)$-element of the $\alg^{d_j\times d_{j-1}}$-valued weight matrix $W^j$.
Then, the $j$th layer $f_j$ of the $C^*$-algebra net is represented as 
\begin{align}
(f_j(x))_i&
=\sigma_j\bigg(\sum_{k=1}^{d_{j-1}}\sum_{l=1}^mc_{l,i,k}^jv_lx_k^{j-1}+b^j_i\bigg)\in\alg,\label{eq:discretized_c_star_net}
\end{align}
where $(f_j(x))_i$ is the $i$th element of the $\alg^{d_j}$-valued vector $f_j(x)\in\alg^{d_j}$.
In addition, $x^0=x$ is the input and $x^j=\sigma_{j}(Wx^{j-1}+b^j)$ is the output of $j$th layer.
In this case, the weight parameters of the $j$th layer of the $C^*$-algebra net are described by $md_jd_{j-1}$ real- or complex-valued parameters.

In the following, we discuss advantages of $C^*$-algebra net. We first show new results about expressiveness and optimization.
Then, we discuss existing investigations about interactions among models.

\subsubsection{Expressiveness}\label{subsec:expressiveness}
An advantage of the $C^*$-algebra net is the expressiveness with respect to the variable $z$.
We focus on a simple case for the discretized version of the $C^*$-algebra net~\eqref{eq:discretized_c_star_net} and show that the representation power of $f_z(x)$ grows as $L$ grows even in this simple case.
We will see that the $C^*$-algebra net is a polynomial of $v_1(z),\ldots,v_l(z)$. 
The $C^*$-algebra net depends on $z$ and $x$ in different ways.
It can be useful for the case where $z$ and $x$ have different attributions.
For example, $z$ is a time variable, and $x$ is a space variable.

Assume the activation function $\sigma_j$ is linear, that is, it satisfies $\sigma_j(\sum_{i=1}^mc_iu_i+b)=\sum_{i=1}^m\sigma_j(c_i)u_i+\sigma_j(b)$ for any $c_1,\ldots,c_m,u_1,\ldots,u_m,b\in\mathbb{C}$.
In addition, for simplicity, we assume the input and the biases are constant functions, i.e., $x(z)=\hat{x}$ and $b^j(z)=\hat{b}^j$ for any $z\in\mcl{Z}$, some $\hat{x}\in\mathbb{C}^{d_0}$, and some $\hat{b}^j\in\mathbb{C}^{d_j}$.
We have the following proposition.

\begin{proposition}\label{prop:c_star_net_poly}
The $C^*$-algebra net $f_z(x)$ is a degree $L$ polynomial with respect to $v_1(z),\ldots,v_m(z)$. 
\end{proposition}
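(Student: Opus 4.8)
The plan is to argue by induction on the layer index $j$, tracking the polynomial degree in the indeterminates $v_1(z),\ldots,v_m(z)$ of each intermediate activation $x^j(z)$. Before starting the induction I would first extract the precise consequence of the linearity hypothesis. Writing $\tilde{\sigma}_j$ for the pointwise scalar map, the assumed identity $\tilde{\sigma}_j(\sum_{i} c_i u_i + b)=\sum_i \tilde{\sigma}_j(c_i)u_i+\tilde{\sigma}_j(b)$ for all $c_i,u_i,b\in\mathbb{C}$ in fact forces $\tilde{\sigma}_j$ to be homogeneous linear: setting all $c_i$ and $b$ to zero yields $\tilde{\sigma}_j(0)=0$, after which a single nonzero term gives $\tilde{\sigma}_j(t)=\tilde{\sigma}_j(1)\,t$. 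Hence $\tilde{\sigma}_j(t)=\gamma_j t$ for a constant $\gamma_j\in\mathbb{C}$. The upshot is that applying $\sigma_j$ multiplies its (polynomial-valued) argument by the scalar $\gamma_j$, and therefore never raises or lowers the polynomial degree.

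With this in hand the induction is routine. The inductive hypothesis is that every entry of $x^{j-1}(z)$ lies in the commutative polynomial ring $\mathbb{C}[v_1(z),\ldots,v_m(z)]$ and has degree at most $j-1$. The base case $j=0$ holds because $x^0(z)=\hat{x}$ is a constant vector (degree $0$), using the assumption that the input is a constant function. For the step, each weight entry $w^j_{i,k}(z)=\sum_{l=1}^m c^j_{l,i,k}v_l(z)$ is a degree-one polynomial, so the $i$th pre-activation entry
\[
\sum_{k=1}^{d_{j-1}} w^j_{i,k}(z)\,(x^{j-1}(z))_k + \hat{b}^j_i
\]
is a sum of products of a degree-one factor with a degree-$(\le j-1)$ factor, plus the constant bias $\hat{b}^j_i$; its degree is therefore at most $j$. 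Applying $\sigma_j$ scales by $\gamma_j$ (by the first paragraph) and preserves degree, so $(x^j(z))_i$ has degree at most $j$. Setting $j=L$ gives that $f_z(x)=x^L(z)$ is a polynomial of degree at most $L$, and inspecting the leading term shows that for generic coefficients $c^j_{l,i,k}$ it is exactly $L$.

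The step that requires the most care is pinning down the intended meaning of ``polynomial in $v_1(z),\ldots,v_m(z)$,'' because $\alg=C(\mcl{Z})$ is closed under products: a product $v_l(z)v_{l'}(z)$ is again an element of $\alg$ and could in principle be re-expanded in the basis $\{v_1,\ldots,v_m\}$, which would collapse every degree back to one and trivialize the statement. The correct reading---and the one the proof relies on---is that the $v_l(z)$ are treated as formal indeterminates and their products are kept as genuine higher-degree monomials rather than re-expanded. It is worth remarking explicitly that the commutativity of $\alg=C(\mcl{Z})$ places us in the ordinary commutative polynomial ring, so that ``degree'' is unambiguous; for a noncommutative $\alg$ one would instead land in a free polynomial algebra and the same count would go through word-by-word. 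No other part of the argument is delicate: the whole content is the interplay between the degree-one weights, which raise the degree by one per layer, and the degree-preserving linear activations.
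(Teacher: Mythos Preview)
Your argument is correct and takes a somewhat different route from the paper's. The paper does not reduce the activation to a scalar multiple; instead it unfolds the entire nested expression $(f_z(x))_{k_L}$ and applies the assumed identity $\sigma_j(\sum_i c_i u_i+b)=\sum_i\sigma_j(c_i)u_i+\sigma_j(b)$ layer by layer with the role of $u_i$ played by the scalars $v_l(z)$, thereby pulling each $v_l(z)$ factor outside the activations and producing in one displayed formula the explicit polynomial expansion (a degree-$L$ leading term plus the lower-degree bias contributions). Your approach instead extracts the consequence $\tilde{\sigma}_j(t)=\gamma_j t$ from the linearity hypothesis up front---a reduction the paper never states---and then tracks degrees by a clean induction on the layer index. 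The paper's version buys the explicit coefficients of $f_z(x)$; yours buys a shorter, more structural argument. One small slip: ``setting all $c_i$ and $b$ to zero'' does not by itself force $\tilde{\sigma}_j(0)=0$ (that substitution is a tautology); you need, say, $m=1$, $c_1=0$, $b=0$, $u_1=1$, which gives $\tilde\sigma_j(0)=\tilde\sigma_j(0)\cdot 1+\tilde\sigma_j(0)$ and hence $\tilde\sigma_j(0)=0$. The conclusion is unaffected.
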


See Appendix~\ref{ap:proofs} for the proof of Proposition~\ref{prop:c_star_net_poly}.
Proposition~\ref{prop:c_star_net_poly} shows that even if this simple case of the activation function $\sigma_j$ is linear, $f_z(x)$ is nonlinear with respect to $z$.
We can also construct a network whose input space is $\mathbb{C}^{d_0}\times \mcl{Z}$.
However, the situation of the $C^*$-algebra net with a finite-dimensional approximation is totally different from the case of the network on $\mathbb{C}^{d_0}\times \mcl{Z}$.
For the case of the network on $\mathbb{C}^{d_0}\times \mcl{Z}$, if all the activation functions are the ReLU, defined as $\sigma(x)=\max\{0,x\}$ for $x\in\mathbb{R}$, then the approximation is obtained by the piecewise linear functions with respect to $z$~\citep{hanin19}.
On the other hand, in the case of the $C^*$-algebra net, if all the activation functions are linear, then the approximation is obtained by the polynomials with respect to $v_l(z)$.
This fact means that even if the activation functions are linear, the representation power of the network with respect to $z$ grows as $L$ becomes large.
{In summary, the expressive power of $C^*$-algebra nets is high enough so that they can represent polynomials with respect to $z$ even if the activation functions are linear. In addition, using $C^*$-algebra nets, we can induce a new type of nonlinearity that is different from the nonlinearity induced by the classical neural networks.}

\subsubsection{Optimization}\label{subsec:optimization}
Another advantage of the $C^*$-algebra net is that it fills the gap between convex and nonconvex optimization problems.
Since the weight matrices of $C^*$-algebra nets are functions, they correspond to infinitely many weight matrices.
Therefore, if we set a $C^*$-algebra appropriately, then we can represent an arbitrary scalar-valued network as a single $C^*$-algebra network.
As a result, the optimization problem of the standard scalar-valued network is reduced to a convex optimization problem of a $C^*$-algebra network.
Convex optimization of neural networks has been proposed and investigated~\citep{bengio05,nitanda17,chizat18,daneshmand23a}.
In these studies, they consider learning the distribution of the weight parameters, which makes the optimization problem convex.
On the other hand, the objective function becomes highly nonconvex if we consider optimizing weight parameters themselves, not the distribution of them.
We show that the framework of $C^*$-algebra nets fills the gap between these convex and nonconvex optimizations.

For simplicity, we focus on neural networks with real-valued parameters and assume the input $x$ is in the form $x(z)=\hat{x}$ for some $\hat{x}\in\mathbb{R}^{d_0}$.
Let $\Omega$ be an interval in $\mathbb{R}$, and let $\mcl{W}$ be a compact space.
Let $\alpha^j_{i,k}:\mcl{W}\to\Omega$ be a surjective function for $j=1,\ldots,L$, $i=1,\ldots,d_j$, and $k=1,\ldots,d_{j-1}$.
The simplest example is setting $\mcl{W}=\Omega$ and $\alpha^j_{i,k}$ as the identity.
Let $N=\sum_{j=1}^L{d_j(d_{j-1}+1)}$, the number of parameters (weight and bias parameters), $\mcl{Z}=\mcl{W}^N$, and $\alg=C(\mcl{Z})$.
Define ${W}^j:\mcl{Z}\to\mathbb{R}^{d_j\times d_{j-1}}$ and ${b}^j:\mcl{Z}^N\to\r{d_j}$ as
\begin{align*}
&w^j_{i,k}(z^1_{1,1},\ldots,z^j_{i-1,k},z,z^j_{i+1,k},\ldots,z^L_{d_{L},d_{L-1}+1})
=\alpha^j_{i,k}(z)\\
&{b}^j_i(z^1_{1,1},\ldots,z^j_{i-1,d_{j-1}+1},z,z^j_{i+1,d_{j-1}+1},\ldots,z^L_{d_{L},d_{L-1}+1})\\
&\qquad=\alpha^j_{i,k}(z)
\end{align*}
for any $z^1_{1,1},\ldots,z^j_{i,k-1},z^j_{i,k+1},\ldots,z^L_{d_{L},d_{L-1}+1}\in\mcl{Z}$
and $z^1_{1,1},\ldots,z^j_{i-1,d_{j-1}+1},z^j_{i+1,d_{j-1}+1},\ldots,z^L_{d_{L},d_{L-1}+1}$, respectively.
Let $\hat{\sigma}:\r{d_j}\to\r{d_j}$ be an activation function for standard scalar-valued networks.
Set the activation function ${\sigma}_j:\alg^{d_j}\to \alg^{d_j}$ as ${\sigma}_j(x)(z)=\hat{\sigma}_j(x(z))$ for any $x\in\alg^{d_j}$ and $z\in\mcl{Z}$.
With the above $W^j$, $b^j$, and $\sigma_j$, we construct the $C^*$-algebra net $f$.
Then, we can show that we can represent any real-valued neural network by the form of $f_z$.

Let $\mcl{F}$ be the class of the $C^*$-algebra nets defined above. 
Assume for any $\hat{x}\in\r{d_0}$, $f(\hat{x})$ is bounded on $\mcl{Z}$.
Let $\mcl{P}(\mcl{Z})$ be the set of probability measures on $\mcl{Z}$.
As we mentioned as Eq.~\eqref{eq:c_star_net_integral}, we integrate ${f}\in\mcl{F}$ over $\mcl{Z}$ with respect to a probability measure $P\in\mcl{P}(\mcl{Z})$ to get a scalar-valued function. 
For $P\in\mcl{P}(\mcl{Z})$, let $A_P$ be the integral operator on $\mcl{F}$ defined as
$A_P f(\hat{x})=\int_{\mcl{Z}}f_z(\hat{x})\mr{d}P(z)$
for $\hat{x}\in\r{d_0}$.
The averaged $C^*$-algebra net $A_Pf(\hat{x})$ is regarded as a continuation of the $(L+1)$-layer neural network $\sum_{i=1}^{d_{L+1}}p_if_{z_i}(\hat{x})$, where $d_{L+1}\in\mathbb{N}$ and $p_i$ is the weight parameter of the final layer.
Figure~\ref{fig:opt} schematically shows the averaged $C^*$-algebra net $A_Pf(\hat{x})$.
We can see that the optimization problem of learning $P$ is convex.
See Appendix~\ref{ap:optimization} for more details.

The cases 1) Fixing $f\in\mcl{F}$ and optimizing $P$, and 2) optimizing a real-valued network $f_z$ with respect to the weight parameters are two extremes.
By virtue of the $C^*$-algebra net, we can define intermediate cases.
Let $K\in\{0,\ldots,N\}$ and let $M_1,\ldots,M_K$ be disjoint subsets of $\mcl{I}:=\{(j,i,k)\,\mid\,j=1,\ldots,L,\ i=1,\ldots,d_j,\ k=1,\ldots,d_{j-1}\}$ and let $\mcl{Z}=\mcl{W}^K$.
If $K=0$, then we set $\mcl{Z}=\mcl{W}^0$ as a singleton, and we also set $M_0$ as an infinite set containing $\mcl{I}$.
Note that in the case where $\mcl{Z}$ is a singleton, $C(\mcl{Z})$ is isomorphic to $\mathbb{C}$.
For $l=1,\ldots,K$, the variables $z^j_{i,k}$ for $(j,i,k)\in M_l$ are tied together and represented as a single variable.
Let
\begin{align*}
w_{i,k}^j(z_1,\ldots,z_{l-1},z,z_{l+1},\ldots,z_K)&=\alpha_{i,k}^j(z)
\end{align*}
for $(j,i,k)\in M_l$.
If $K=N$, then we reconstruct the case 1).
If $K=0$, then we have $w^j_{i,k}\in\mathbb{C}$, and $\alpha^j_{i,k}$ is necessarily a constant function.
In addition, since $\mcl{Z}$ is a singleton, $\mcl{P}(\mcl{Z})=\{1\}$.
As a result, we reconstruct the case 2).
As for the learning, if $(j,k,i)\in M_l$ for some $l$ with $\vert M_l\vert\ge 2$, then we learn $\alpha^j_{i,k}$.
If $(j,i,k)\in M_l$ for some $l$ with $\vert M_l\vert=1$, then we fix $\alpha^j_{i,k}$.
We also learn $P$.

\begin{figure}[t]
    \centering
    \includegraphics[scale=0.44]{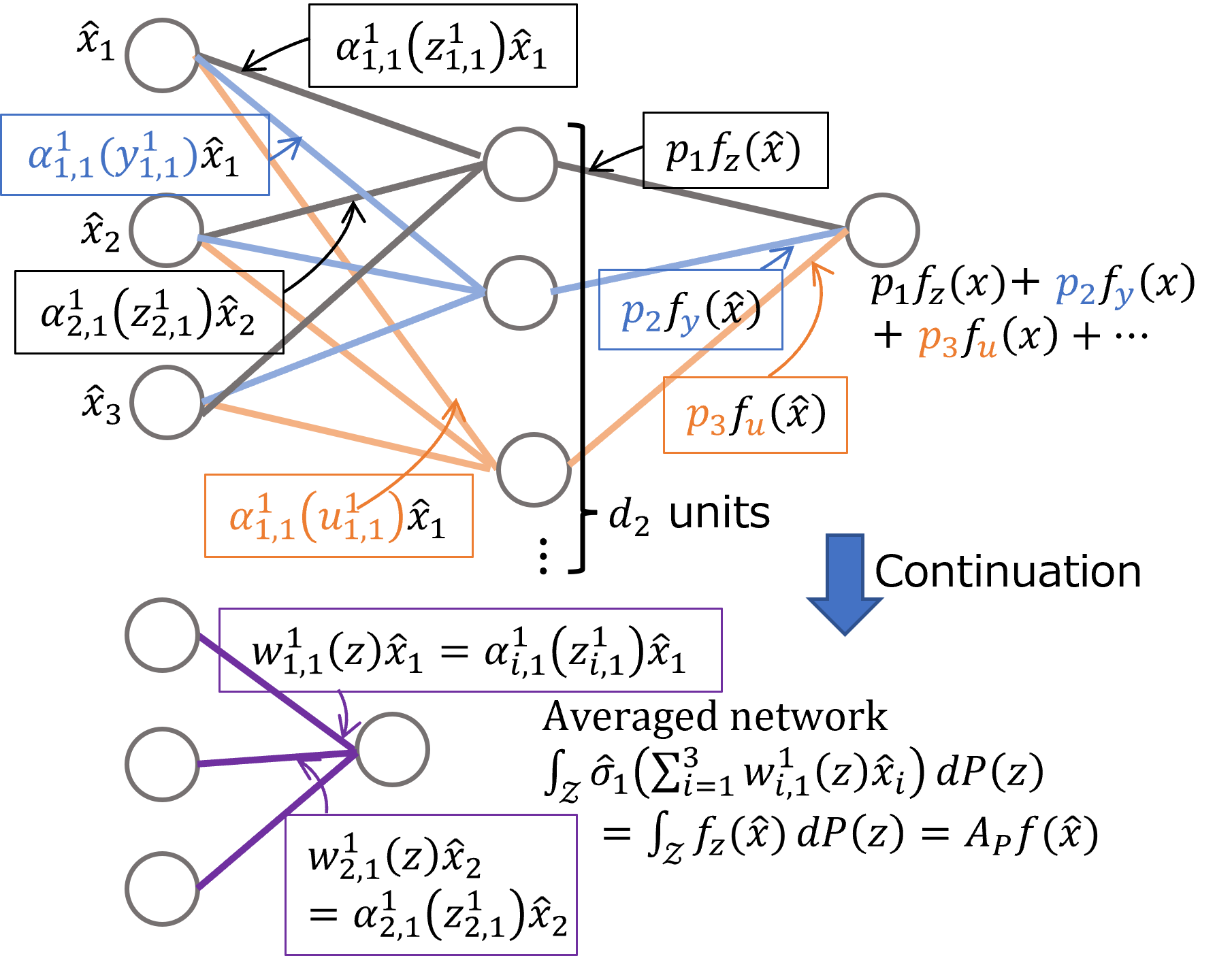}\vspace{-.3cm}
    \caption{Overview of the averaged $C^*$-algebra net $A_Pf(x)$. Here, $f_z(x)=\hat{\sigma}_1(\sum_{i=1}^3\alpha^1_{i,1}(z)x_i)$.
    We can regard $A_Pf(x)$ as a continuation of the $2$-layer neural network $\sum_{i=1}^{d_{2}}p_if_{z_i}(x)$.}
    \label{fig:opt}\vspace{-.5cm}
\end{figure}


\subsubsection{Interactions among models}\label{subsec:interaction}
The product structure of the $C^*$-algebra gives the model additional structures.
A generalization of the $C^*$-algebra net to noncommutative $C^*$-algebra is also proposed~\citep{hataya23_cstar}.
In the above framework of the $C^*$-algebra network, we focus on the $C^*$-algebra of continuous functions, whose product structure is commutative.
Therefore, if we consider the original $C^*$-algebra net~\eqref{eq:c_star_net}, not the discretized one~\eqref{eq:discretized_c_star_net}, then we have a separated neural network $f_z$ for each $z$.
The models do not interact without designing additional regularization terms to the loss function.
By regarding $w\in C(\mcl{Z})$ as the multiplication operator $M_w\in \mcl{B}(L^2(\mcl{Z}))$ defined as $M_wv=w\cdot v$,
we can regard the $C^*$-algebra net over $C(\mcl{Z})$ as that over $\mcl{B}(L^2(\mcl{Z}))$, where $L^2(\mcl{Z})$ is the space of square-integrable functions on $\mcl{Z}$.
For the case where $\mcl{Z}$ is a finite set, $C(\mcl{Z})$ corresponds to the space of squared diagonal matrices, and $\mcl{B}(L^2(\mcl{Z}))$ corresponds to the space of general squared matrices.
Replacing $C(\mcl{Z})$ with $\mcl{B}(L^2(\mcl{Z}))$ and adding the nondiagonal part to the weight parameter $w^j_{i,k}$, 
we cannot separate each network $f_z$ since the product structure becomes more complicated.
\citet{hataya23_cstar} also proposed $C^*$-algebra nets over group $C^*$-algebras, which enable us to construct group equivariant neural networks by virtue of the noncommutative product structure in the group $C^*$-algebra.

\section{Future Directions}
As we discussed in the previous sections, generalizing machine learning methods by means of $C^*$-algebra enables us to go beyond the existing methods.
\red{However, there are many challenges involved. We discuss some them.}

\subsection{\red{Challenges}}

\paragraph{Implementation and computational cost}
When we implement the methods with $C^*$-algebras, we have to represent elements in the $C^*$-algebras so that they are suitable for the computation.
If the $C^*$-algebra is an infinite-dimensional space, we have to somehow discretize elements in the $C^*$-algebra.
{\citet{hashimoto21} use Fourier functions to discretize the functions in a $C^*$-algebra. Using kernel ridge regression to represent functions in a $C^*$-algebra is also proposed~\citep{hashimoto22}. However, the effect of these methods on the entire algorithms has not been investigated, and representing elements in $C^*$-algebra properly is an important future direction of research. }
Even if we have an appropriate discretization method, the computational cost becomes expensive if the number of points for the discretization is large.
{For example, for kernel methods, if we represent elements in the $C^*$-algebra as $d$ by $d$ matrices, then the size of the Gram matrix is $nd$ by $nd$, where $n$ is the sample size. The cost for the computation involving the Gram matrix is expensive if $n$ and $d$ are large. 
In addition, for neural networks, if we represent weight parameters as $d$ by $d$ matrices, then the number of learnable parameters is $d^2$ times as large as that for the standard neural network with the same architecture. To alleviate the dependency on $d$, appropriate representations of elements of $C^*$-algebras to reduce computational costs should be investigated. }
{In addition, although source codes are provided by the authors of the papers, as far as we know, no software for $C^*$-algebraic machine learning has been developed so far. The development of software is crucial to familiarize the machine learning community with the concept of $C^*$-algebra.}

\paragraph{Lack of the inverse} 
{An element in a $C^*$-algebra does not always have its inverse. This situation is different from the standard complex- or real-valued case. This difference makes it difficult for us to generalize algorithms and theorems in Hilbert spaces straightforwardly. In fact, in Hilbert $C^*$-modules, once we normalize a vector, we cannot reconstruct the original vector exactly. However, we can obtain a normalized vector reconstructing a vector that is sufficiently close to the original vector \citep[Proposition 3.2]{hashimoto23_moea}. In addition, we only have an approximate version of the representer theorem for RKHMs \citep[Theorem 4.5]{hashimoto23_moea}. We sometimes have to give up constructing the exactly same algorithms and results as those in Hilbert spaces and devote ourselves to investigate how we can approximately obtain the algorithms and results.}

\paragraph{Dealing with infinite-dimensional spaces}
Proving theoretical aspects of applying $C^*$-algebras to machine learning is not always straightforward.
For example, as we mentioned in Subsection~\ref{subsec:kme}, Riesz representation theorem is not always true for Hilbert $C^*$-modules.
In addition, for a submodule of a Hilbert $C^*$-module, its orthogonal complement does not always exist~\citep{manuilov00}.
Since these properties are fundamental for Hilbert spaces and used in proving and guaranteeing the theoretical aspects of machine learning methods, we have to be careful when we try to analyze methods with $C^*$-algebras theoretically.

\paragraph{Designing kernels}
Further investigation for designing positive definite kernels using $C^*$-algebras would be interesting. 
The kernel proposed by~\citet{hashimoto23-aistats}, which we discussed in Subsection~\ref{subsec:rkhm}, is based on the convolution and is suitable for image data. 
Other kernels for other types of data, such as graphs and functions, should be investigated.

\paragraph{Theory of $C^*$-algebra nets}
Investigating generalization property and implicit regularization of the $C^*$-algebra net is an interesting future work. For example, it would be interesting to consider what types of $C^*$-algebra induce generalization or implicit regularization.
In addition, understanding the standard real-valued neural networks and developing new methods regarding them through $C^*$-algebra net would also be interesting.
For example, constructing an optimization method based on the observation in Subsection~\ref{subsec:optimization} to obtain a better solution is future work.

\paragraph{$C^*$-algebraic quantum machine learning}
{We can represent various notions in quantum machine learning using $C^*$-algebra. For example, density matrices are represented by matrices. Analyzing quantum states using RKHMs or $C^*$-algebra nets is an interesting direction of research. In addition, quantum gates are represented by unitary matrices. Constructing or analyzing quantum circuits using $C^*$-algebra nets is also an interesting direction of research. $C^*$-algebras could also give rise to new machine learning methods which can be implemented more efficiently using a quantum computer.}

\subsection{Further examples of $C^*$-algebra for future work}
\paragraph{Cuntz algebra}
Let $\mcl{W}$ be a Hilbert space. 
Cuntz algebra $\mcl{O}_n$ is a $C^*$-algebra generated by the isometries on $\mcl{W}$, i.e., linear operators on $\mcl{W}$ satisfying $S_i^*S_i=1$ ($i=1,\ldots,n$) and $\sum_{i=1}^nS_iS_i^*=1$, where $1$ is the identity map on $\mcl{W}$~\citep{cuntz77}.
We can represent variable-length data, each of whose element is a discrete value.
For example, we can set $S_1,\ldots,S_n$ as the dictionary of words and represent sentence as the product of the words from $S_1,\ldots,S_n$.
We can also set $S_1,\ldots,S_n$ as nodes and represent paths using the product of the nodes from $S_1,\ldots,S_n$.
Moreover, there are existing studies for applying Cuntz algebras to represent the filters in signal processing~\citep{jorgebsen07}.

\paragraph{Approximately finite dimensional $C^*$-algebra}
A $C^*$-algebra is referred to as approximately finite (AF) dimensional if it is the closure of an increasing union of finite dimensional subalgebras~\citep{davidson96}.
We can use AF $C^*$-algebras for representing data whose dimensions can vary, such as the adjacent matrices of social network graphs.

\section{Conclusion}
We proposed $C^*$-algebraic machine learning and discussed advantages and challenges of applying $C^*$-algebra to machine learning methods.
We can represent structured data and multiple models using $C^*$-algebras, and by incorporating them into algorithms, we can fully extract features of data. 
$C^*$-algebra gives us a new twist on machine learning.
We hope that our analysis will lead to greater attention to $C^*$-algebra in machine learning.

\section*{Acknowledgements}
Hachem Kadri is partially supported by grant ANR-19-CE23-0011 from the French National Research Agency.
Masahiro Ikeda is partially supported by grant JPMJCR1913 from JST CREST.


\bibliography{example_paper}
\bibliographystyle{icml2024}

\newpage
\appendix
\onecolumn
\section*{Appendix}

\section{Details of Subsection~\ref{subsec:optimization}}\label{ap:optimization}
We provide the details of Subsection~\ref{subsec:optimization}.
With $W^j$, $b^j$, and $\sigma_j$ defined in Subsection~\ref{subsec:optimization}, let
\begin{align}
{f}_j(x)&={\sigma}_j({W}^jx+{b}^j)\ (j=1,\ldots,L).\label{eq:tilde_f}
\end{align}
We consider the set of $C^*$-algebra nets defined as $\mcl{F}_{\mcl{Z}}=\{{f}_z\mid {f=f_L\circ\cdots\circ f_1}\mbox{ with }f_j \mbox{in Eq. \eqref{eq:tilde_f}, }z\in\mcl{Z}\}$.
In addition, we set the following function class of the standard real-valued networks:
\begin{align*}
\hat{\mcl{F}}&=\{\hat{f}_L\circ\cdots\circ \hat{f}_1\,\mid\,\hat{f}_j(x)=\hat{\sigma}_j(\hat{W}^jx+\hat{b}^j),\ 
\hat{W}^j\in\Omega^{d_j\times d_{j-1}},\ \hat{b}^j\in\Omega^{d_j}\}.
\end{align*}

\begin{proposition}\label{prop:function_class_equiv}
We have $\mcl{F}_{\mcl{Z}}=\hat{\mcl{F}}$ as sets.
\end{proposition}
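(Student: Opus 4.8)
The plan is to prove the set equality by establishing the two inclusions $\mcl{F}_{\mcl{Z}}\subseteq\hat{\mcl{F}}$ and $\hat{\mcl{F}}\subseteq\mcl{F}_{\mcl{Z}}$ separately. The structural fact underlying both directions is that, because each activation $\sigma_j$ acts pointwise (by construction $\sigma_j(x)(z)=\hat{\sigma}_j(x(z))$), evaluating the $C^*$-algebra net $f=f_L\circ\cdots\circ f_1$ at a single point $z\in\mcl{Z}$ collapses the $\alg$-valued composition into an ordinary real-valued network whose parameters are the evaluations $W^j(z)$ and $b^j(z)$. Making this precise is exactly the content of Eq.~\eqref{eq:c_star_net}, specialized to the constant input $x(z)=\hat{x}$.

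First I would fix $z\in\mcl{Z}=\mcl{W}^N$ and compute $f_z(\hat{x})$. Writing out the composition and pushing the evaluation at $z$ through each pointwise activation layer by layer, I obtain $f_z(\hat{x})=\hat{\sigma}_L(W^L(z)\hat{\sigma}_{L-1}(\cdots\hat{\sigma}_1(W^1(z)\hat{x}+b^1(z))\cdots)+b^L(z))$. By the definition of the weights, the $(i,k)$-entry of the matrix $W^j(z)$ equals $\alpha^j_{i,k}$ evaluated at the coordinate of $z$ indexed by $(j,i,k)$, and similarly for $b^j(z)$; since each $\alpha^j_{i,k}$ takes values in $\Omega$, we have $W^j(z)\in\Omega^{d_j\times d_{j-1}}$ and $b^j(z)\in\Omega^{d_j}$. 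Setting $\hat{W}^j:=W^j(z)$ and $\hat{b}^j:=b^j(z)$ then exhibits $f_z$ as an element of $\hat{\mcl{F}}$, which gives $\mcl{F}_{\mcl{Z}}\subseteq\hat{\mcl{F}}$.

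For the reverse inclusion, I would start from an arbitrary $\hat{f}=\hat{f}_L\circ\cdots\circ\hat{f}_1\in\hat{\mcl{F}}$ with $\hat{W}^j\in\Omega^{d_j\times d_{j-1}}$ and $\hat{b}^j\in\Omega^{d_j}$, and construct a point $z\in\mcl{Z}$ realizing it. Here the surjectivity of each $\alpha^j_{i,k}$ is the crucial hypothesis: for each entry $\hat{w}^j_{i,k}$ of $\hat{W}^j$ I choose a preimage in $\mcl{W}$ under $\alpha^j_{i,k}$, and likewise for each bias entry, then assemble these $N$ choices into a single $z\in\mcl{W}^N=\mcl{Z}$. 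By the computation of the previous paragraph, $f_z$ then carries precisely the weights $\hat{W}^j$ and biases $\hat{b}^j$, so $f_z=\hat{f}$ and $\hat{f}\in\mcl{F}_{\mcl{Z}}$.

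I do not expect a genuine technical obstacle here; the argument is a direct verification. The only point requiring care is the bookkeeping of the coordinate-to-parameter correspondence — namely that $\mcl{Z}=\mcl{W}^N$ carries exactly one coordinate per weight and per bias entry, so that the preimage choices in the second inclusion can be made independently and combined into one $z$ without conflict. Once this indexing is set up cleanly, both inclusions follow immediately from the pointwise activation property and the surjectivity of the $\alpha^j_{i,k}$.
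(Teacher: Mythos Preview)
Your proposal is correct and follows essentially the same approach as the paper: evaluate the pointwise-activated $C^*$-algebra net at a fixed $z$ to obtain a real-valued network with parameters $W^j(z),b^j(z)\in\Omega$, giving $\mcl{F}_{\mcl{Z}}\subseteq\hat{\mcl{F}}$; then use surjectivity of the $\alpha^j_{i,k}$ and the one-coordinate-per-parameter structure of $\mcl{Z}=\mcl{W}^N$ to realize any $\hat{f}\in\hat{\mcl{F}}$ as some $f_z$. The paper's proof is nearly identical for the first inclusion and simply declares the reverse inclusion ``trivial by the definition of $\mcl{F}_{\mcl{Z}}$,'' whereas you spell out explicitly how surjectivity is used to construct $z$; your version is in fact the more careful one on that point.
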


\begin{proof}
Let ${f}_z\in \mcl{F}_{\mcl{Z}}$ for some $z\in\mcl{Z}$.
Let $\hat{w}^j_{i,k}={w}^j_{i,k}(z)$ and $\hat{b}^j_i={b}^j_i(z)$ for any $j=1,\ldots,L$, $i=1,\ldots,d_j$, and $k=1,\ldots,d_{j-1}$.
In addition, let $\hat{\sigma}_j(x)=\sigma_j(x1_{\alg})(z)$ for $x\in\r{d_j}$, where $1_{\alg}$ is the constant map defined as $1_{\alg}(z)=1$ for any $z\in\mcl{Z}$.
We construct $\hat{f}=\hat{f}_L\circ \cdots\circ \hat{f}_1$ as $\hat{f}_j(x)=\hat{\sigma}_j(\hat{W}^jx+\hat{b}^j)$ with $\hat{W}^j$, $\hat{b}^j$, and $\hat{\sigma}_j$ defined above.
Then, we have $f_z=\hat{f}$ and $f_z\in\hat{\mcl{F}}$.
The converse is trivial by the definition of $\mcl{F}_{\mcl{Z}}$.
\end{proof}

Let $\mcl{F}$ be the class of the functions $f=f_L\circ \cdots\circ f_1$ defined as Eq.~\eqref{eq:tilde_f}.
Assume for any $\hat{x}\in\r{d_0}$, $f(\hat{x})$ is bounded on $\mcl{Z}$.
Let $\mcl{P}(\mcl{Z})$ be the set of probability measures on $\mcl{Z}$.
As we mentioned as Eq.~\eqref{eq:c_star_net_integral}, we integrate ${f}\in\mcl{F}$ over $\mcl{Z}$ with respect to a probability measure $P\in\mcl{P}(\mcl{Z})$ to get a scalar-valued function. 
For $P\in\mcl{P}(\mcl{Z})$, let $A_P$ be the integral operator on $\mcl{F}$ defined as
$A_P f(\hat{x})=\int_{\mcl{Z}}f_z(\hat{x})\mr{d}P(z)$
for $\hat{x}\in\r{d_0}$.
The following proposition shows the optimization problem of learning $P$ is convex.

\begin{proposition}\label{prop:convex}
Let $\mcl{L}:\r{d_L}\times\r{d_L}\to\mathbb{R}_+$ be a loss function that is continuous and where for any $y\in\r{d_L}$, the function $\mcl{L}(\cdot,y)$ on $\r{d_L}$ is convex.
Fix $f$ as a network defined as \eqref{eq:tilde_f}.
Then, for any $\hat{x}\in\r{d_0}$ and any $y\in\r{d_L}$, the map $\mcl{P}(\mcl{Z})\to \mathbb{R}_+$ defined as $P\mapsto \mcl{L}(A_Pf(\hat{x}),y)$ is convex.
\end{proposition}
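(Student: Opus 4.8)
The plan is to exploit the fact that the measure $P$ enters the objective only through the averaging operator $A_P$, and that integration against a measure is \emph{linear in the measure} even though the network $f_z$ is a highly nonlinear function of its input and of $z$. Consequently the map $P\mapsto A_Pf(\hat x)$ is affine, and the desired convexity will follow simply by composing this affine map with the convex loss $\mcl{L}(\cdot,y)$. The whole argument is therefore a ``convex function composed with an affine map is convex'' observation, and the only thing worth emphasizing is \emph{which} dependence is affine.

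First I would record that the domain $\mcl{P}(\mcl{Z})$ is itself a convex set: for $P,Q\in\mcl{P}(\mcl{Z})$ and $\lambda\in[0,1]$, the combination $\lambda P+(1-\lambda)Q$ is again nonnegative with total mass $1$, hence lies in $\mcl{P}(\mcl{Z})$, so the quantity $\mcl{L}(A_{\lambda P+(1-\lambda)Q}f(\hat x),y)$ is well defined and convexity is the correct notion to verify. At this point I would also note that the standing assumption that $f(\hat x)$ is bounded on $\mcl{Z}$, together with measurability of $z\mapsto f_z(\hat x)$, guarantees that $A_Pf(\hat x)=\int_{\mcl{Z}}f_z(\hat x)\,\mr{d}P(z)$ is finite for every $P\in\mcl{P}(\mcl{Z})$.

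Next I would establish the key identity that $A_Pf(\hat x)$ is affine in $P$: for fixed $f$ and $\hat x$,
\[
A_{\lambda P+(1-\lambda)Q}f(\hat x)=\int_{\mcl{Z}}f_z(\hat x)\,\mr{d}(\lambda P+(1-\lambda)Q)(z)=\lambda A_Pf(\hat x)+(1-\lambda)A_Qf(\hat x),
\]
which is immediate from linearity of the Lebesgue integral with respect to the measure. The point to stress is that this step uses no linearity of $f_z$ in $x$ or in $z$ whatsoever; all of the network's nonlinearity sits inside the integrand and is left untouched when one forms convex combinations of the mixing measures.

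Finally I would combine the two ingredients. Applying the hypothesis that $\mcl{L}(\cdot,y)$ is convex on $\r{d_L}$ to the two points $A_Pf(\hat x)$ and $A_Qf(\hat x)$ gives
\[
\mcl{L}(\lambda A_Pf(\hat x)+(1-\lambda)A_Qf(\hat x),y)\le \lambda\,\mcl{L}(A_Pf(\hat x),y)+(1-\lambda)\,\mcl{L}(A_Qf(\hat x),y),
\]
and substituting the affine identity into the left-hand side rewrites it as $\mcl{L}(A_{\lambda P+(1-\lambda)Q}f(\hat x),y)$, which is exactly the convexity inequality for $P\mapsto\mcl{L}(A_Pf(\hat x),y)$. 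I do not expect a genuine obstacle here; the entire content of the proposition is the conceptual observation that the averaging operator $A_P$ decouples the nonconvex dependence of the network on its weights from the convex dependence on the mixing measure $P$, so convexity in $P$ comes for free once $\mcl{L}(\cdot,y)$ is convex.
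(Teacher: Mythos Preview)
Your proposal is correct and follows exactly the same route as the paper: use linearity of $P\mapsto A_Pf(\hat x)$ to write $A_{\lambda P+(1-\lambda)Q}f(\hat x)=\lambda A_Pf(\hat x)+(1-\lambda)A_Qf(\hat x)$, then apply convexity of $\mcl{L}(\cdot,y)$. The paper's proof is just the two-line version of what you wrote, omitting the preliminary remarks on convexity of $\mcl{P}(\mcl{Z})$ and well-definedness of the integral.
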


\begin{proof}
Since $\mcl{L}(\cdot,y)$ is convex, for $P,Q\in\mcl{P}(\mcl{Z})$ and $t\in[0,1]$, we have
\begin{align*}
\mcl{L}(A_{tP+(1-t)Q}f(x),y)&=\mcl{L}(tA_P f(x)+(1-t)A_Q f(x),y)\\
&\le t\mcl{L}(A_Pf(x),y)+(1-t)\mcl{L}(A_Q f(x),y),
\end{align*}
which completes the proof of the proposition.
\end{proof}

\begin{remark}\label{rem:dirac}
Let $P=\delta_z$, where $\delta_z$ is the Dirac measure centered at $z\in\mcl{Z}$.
Then, $A_{\delta_z}f=f_z$.
Proposition~\ref{prop:function_class_equiv} implies that learning $z$ of $f_z$ for a $C^*$-algebra network $f$ corresponds to learning the weights $\hat{W}^j$ and biases $\hat{b}^j$ of the scalar-valued network $\hat{f}$.
Note that the set $\{A_{\delta_z}\,\mid\,z\in\mcl{Z}\}$ is not convex.
By expanding the search space to the set of probability measures, the optimization problem becomes convex.
\end{remark}

\section{Proof of Proposition~\ref{prop:c_star_net_poly}}\label{ap:proofs}
\begin{mythm}[Proposition~\ref{prop:c_star_net_poly}]
The $C^*$-algebra net $f_z(x)$ is a degree $L$ polynomial with respect to $v_1(z),\ldots,v_m(z)$.
\end{mythm}

\begin{proof}
The $k_L$th element of the output of $f_z(x)$ is written as
\begin{align*}
&(f_z(x))_{k_L}\nn\\
&=\sigma_L\bigg(\sum_{k_{L-1}=1}^{d_{L-1}}\sum_{l_L=1}^mc^L_{l_L,k_L,k_{L-1}}v_{l_L}(z)\sigma_{L-1}\bigg(\cdots
\sigma_2\bigg(\sum_{k_1=1}^{d_1}\sum_{l_2=1}^mc^2_{l_2,k_2,k_1}v_{l_2}(z)\sigma_1\bigg(\sum_{k_0=1}^{d_0}c^1_{l_1,k_1,k_0}v_{l_1}(z)\hat{x}_{k_0}\nn\\
&\quad +\hat{b}^1_{k_1}\bigg)+\hat{b}^2_{k_2}\bigg)\cdots\bigg)+ \hat{b}_{k_L}^L\bigg)\nn\\
&=\!\!\!\!\!\sum_{l_1,\ldots,l_L=1}^mv_{l_L}(z)\cdots v_{l_1}(z)\sigma_L\bigg(\sum_{k_{L-1}=1}^{d_{L-1}}c^L_{l_L,k_L,k_{L-1}}\sigma_{L-1}\bigg(
\cdots\sigma_1\bigg(\sum_{k_0=1}^{d_0}c^1_{l_1,k_1,k_0}\hat{x}_{k_0}\bigg)\cdots\bigg)\nn\\
&\quad+\!\!\!\!\!\!\!\sum_{l_2,\ldots,l_L=1}^mv_{l_L}(z)\cdots v_{l_2}(z)\sigma_L\bigg(\sum_{k_{L-1}=1}^{d_{L-1}}c^L_{l_L,k_L,k_{L-1}}\sigma_{L-1}\bigg(
\cdots\sigma_2\bigg(\sum_{k_1=1}^{d_1}c^2_{l_2,k_2,k_1}\sigma_1(\hat{b}_{k_1}^1)\bigg)\cdots\bigg)\nn\\ 
&\quad+\sum_{l_L=1}^mv_{l_L}(z)\sigma_L\bigg(
\sum_{k_{L-1}=1}^{d_{L-1}}c^L_{l_L,k_L,k_{L-1}}\sigma_{L-1}(\hat{b}^{L-1}_{k_{L-1}})\bigg)
+\sigma_L(\hat{b}^L_{k_L}),
\end{align*}
which completes the proof of the proposition.
\end{proof}



\end{document}